\newtheorem{theorem}{Theorem}
\newtheorem{lemma}{Lemma}
\def\ci{\!\perp\!}
\def\ra{\rightarrow}
\def\la{\leftarrow}
\def\no{\multimap}
\newcommand{\comments}[1]{}
\tikzset{tt/.style={decoration={
  markings,
  mark=at position .485 with {\arrow{>}},
  mark=at position .515 with {\arrow{<}}},postaction={decorate}}}
\begin{document}

\title{Identifiability of Gaussian Structural Equation Models with Dependent Errors Having Equal Variances}
\author{ {\bf Jose M. Pe\~{n}a} \\
Department of Computer and Information Science \\
Link{\"o}ping University\\
58183 Link{\"o}ping, Sweden\\
}

\maketitle

\begin{abstract}
In this paper, we prove that some Gaussian structural equation models with dependent errors having equal variances are identifiable from their corresponding Gaussian distributions. Specifically, we prove identifiability for the Gaussian structural equation models that can be represented as Andersson-Madigan-Perlman chain graphs \citep{Anderssonetal.2001}. These chain graphs were originally developed to represent independence models. However, they are also suitable for representing causal models with additive noise \citep{Penna2016}. Our result implies then that these causal models can be identified from observational data alone. Our result generalizes the result by \citet{PetersandBuhlmann2014}, who considered independent errors having equal variances. The suitability of the equal error variances assumption should be assessed on a per domain basis.
\end{abstract}

\section{PRELIMINARIES}

All the graphs and probability distributions in this paper are defined over a finite set $X$. The elements of $X$ are not distinguished from singletons. Uppercase letters denote random variables and lowercase letters denote random variables' values.

The parents of a set of nodes $S$ of a graph $G$ is the set $Pa_G(S) = \{X_j | X_j \ra X_k$ is in $G$ with $X_k \in S \}$. The descendants of $S$ is the set $De_G(S) = \{X_j | X_k \ra \ldots \ra X_j$ is in $G$ with $X_k \in S \}$. The non-descendants of $S$ is the set $ND_G(S)=X \setminus De_G(S)$. The adjacents of $S$ is the set $Ad_G(S) = \{X_j | X_j \ra X_k$, $X_j \la X_k$ or $X_j - X_k$ is in $G$ with $X_k \in S \}$. A route from a node $X_1$ to a node $X_n$ in $G$ is a sequence of (not necessarily distinct) nodes $X_1, \ldots, X_n$ such that $X_j \in Ad_G(X_{j+1})$ for all $1 \leq j < n$. A route is called a cycle if $X_n=X_1$. A cycle is called a semidirected cycle if it is of the form $X_1 \ra X_2 \no \cdots \no X_n$ where $\no$ is a short for $\ra$ or $-$. A chain graph (CG) is a simple graph with directed and/or undirected edges, and without semidirected cycles. A set of nodes of a CG $G$ is connected if there exists a route in $G$ between every pair of nodes in the set and such that all the edges in the route are undirected. A chain component of $G$ is a maximal connected set.

We now recall the interpretation of CGs due to \citet{Anderssonetal.2001}, \citet{Levitzetal.2001} and \citet{Penna2016}, also known as AMP CGs. A node $X_k$ in a route $\rho$ in a CG $G$ is called a triplex node in $\rho$ if $X_j \ra X_k \la X_l$, $X_j \ra X_k - X_l$, or $X_j - X_k \la X_l$ is a subroute of $\rho$. Moreover, $\rho$ is said to be $C$-open with $C \subseteq X$ when (i) every triplex node in $\rho$ is in $C$, and (ii) every non-triplex node in $\rho$ is outside $C$. Let $A$, $B$ and $C$ denote three disjoint subsets of $X$. When there is no $C$-open route in $G$ between a node in $A$ and a node in $B$, we say that $A$ is separated from $B$ given $C$ in $G$ and denote it as $A \ci_G B | C$.\footnote{\citet{Anderssonetal.2001} originally interpreted CGs via the so-called augmentation criterion. \citet[Theorem 4.1]{Levitzetal.2001} introduced the so-called p-separation criterion and proved its equivalence to the augmentation criterion. \citet[Theorem 2]{Penna2016} introduced the route-based criterion that we use in this paper and proved its equivalence to the p-separation criterion.} The statistical independences represented by $G$ are the separations $A \ci_G B | C$. A probability distribution $p$ is Markovian with respect to $G$ if the independences represented by $G$ are a subset of those in $p$. If the two sets of independences coincide, then $p$ is faithful to $G$. If $G$ has an induced subgraph of the form $X_j \ra X_k \la X_l$, $X_j \ra X_k - X_l$ or $X_j - X_k \la X_l$, then we say that $G$ has a triplex $(X_j,X_k,X_l)$. Two CGs are said to be Markov equivalent if the set of distributions that are Markovian with respect to each CG is the same. We know that two CGs are Markov equivalent if and only if they have the same adjacencies and the same triplexes \citep[Theorem 5]{Anderssonetal.2001}. The following lemma gives an additional characterization.

\begin{lemma}\label{lem:eq}
Two CGs $G$ and $H$ are Markov equivalent if and only if they represent the same independences.
\end{lemma}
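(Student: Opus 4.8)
The plan is to prove the two implications separately. The backward implication is immediate from the definitions: if $G$ and $H$ represent the same independences, then for any distribution $p$ the condition that the independences of $G$ be a subset of those of $p$ holds exactly when the independences of $H$ are a subset of those of $p$. Hence $p$ is Markovian with respect to $G$ if and only if it is Markovian with respect to $H$, so $G$ and $H$ have the same Markovian distributions and are Markov equivalent.

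For the forward implication I would argue through a faithful distribution. Write $I(G)$ for the set of independences represented by $G$ and $I(p)$ for those holding in a distribution $p$, so that $p$ is Markovian with respect to $G$ precisely when $I(G) \subseteq I(p)$. Suppose $p$ is faithful to $G$, i.e.\ $I(p) = I(G)$. Then $I(G) \subseteq I(p)$ trivially, so $p$ is Markovian with respect to $G$; by Markov equivalence $p$ is then Markovian with respect to $H$, which gives $I(H) \subseteq I(p) = I(G)$. Choosing symmetrically a distribution $q$ faithful to $H$ yields $I(G) \subseteq I(q) = I(H)$. Combining the two inclusions gives $I(G) = I(H)$, i.e.\ $G$ and $H$ represent the same independences.

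The whole argument rests on the existence, for every CG, of a faithful distribution, and establishing this is the main obstacle. In the Gaussian family this is a genericity statement: parametrize the Gaussian distributions Markovian with respect to $G$ by the coefficients and error (co)variances of the structural equation model of \citet{Penna2016}, and note that the distribution stays Markovian for all parameter values, while each candidate independence $A \ci B | C$ not in $I(G)$ translates into the vanishing of a fixed polynomial (a partial correlation, equivalently a subdeterminant of the covariance matrix) in these parameters. Completeness of the separation criterion guarantees, for each $A \nci_G B | C$, at least one Markovian distribution in which that independence fails, so the corresponding polynomial is not identically zero and its zero set is a proper algebraic subvariety of measure zero. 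Since $X$ is finite there are only finitely many triples $(A,B,C)$, so the union of these zero sets still has measure zero; any parameter choice outside this union yields a Gaussian $p$ with $I(p) = I(G)$, a faithful distribution. The delicate point is exactly the non-vanishing of each polynomial, which is where the completeness half of the criterion of \citet{Penna2016} is needed.

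Alternatively, one could route the forward implication through the combinatorial characterization of \citet[Theorem 5]{Anderssonetal.2001}. It would then suffice to show directly from the separation criterion that two CGs with the same adjacencies and the same triplexes separate exactly the same triples, and conversely that the adjacencies and triplexes can be recovered from the separations (adjacency of $X_j$ and $X_k$ as the impossibility of separating them by any set, and each triplex by a characteristic separation pattern). This avoids distributions entirely, but the first half---that adjacencies and triplexes determine all separations---is itself the graph-theoretic crux and appears no easier than the faithfulness argument; I would therefore favour the faithful-distribution route above.
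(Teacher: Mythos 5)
Your proof is correct and follows essentially the same route as the paper: the backward direction is immediate, and the forward direction uses a distribution faithful to $G$ and one faithful to $H$ to obtain the two inclusions $I(H) \subseteq I(G)$ and $I(G) \subseteq I(H)$. The only difference is that the paper simply cites \citet[Theorem 6.1]{Levitzetal.2001} for the existence of faithful Gaussian distributions, whereas you sketch the genericity argument behind that result.
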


\begin{proof}
The if part is trivial. To see the only if part, note that \citet[Theorem 6.1]{Levitzetal.2001} prove that there are Gaussian distributions $p$ and $q$ that are faithful to $G$ and $H$, respectively. Moreover, $p$ is Markovian with respect to $H$, because $G$ and $H$ are Markov equivalent. Likewise for $q$ and $G$. Therefore, $G$ and $H$ must represent the same independences.
\end{proof}

\begin{table}
\caption{Algorithm for magnifying an AMP CG.}\label{tab:magnification}
\begin{center}
\begin{tabular}{|rl|}
\hline
&\\
& {\bf Input}: An AMP CG $G_0$.\\
& {\bf Output}: The magnified AMP CG $G'_0$.\\
&\\
1 & Set $G'_0=G_0$\\
2 & For each node $X_j$ in $G_0$\\
3 & \hspace{0.3cm} Add the node $N_j$ and the edge $N_j \ra X_j$ to $G'_0$\\
4 & For each edge $X_j - X_k$ in $G_0$\\
5 & \hspace{0.3cm} Replace $X_j - X_j$ with the edge $N_j - N_j$ in $G'_0$\\
6 & Return $G'_0$\\
&\\
\hline
\end{tabular}
\end{center}
\end{table}

\section{IDENTIFIABILITY}

Consider a structural equation model (SEM) with AMP CG $G_0$. The system includes an equation for each $X_j \in X$, which is of the form
\begin{equation}\label{eq:sem}
X_j = \sum_{X_k \in Pa_{G_0}(X_j)} \beta_{jk} X_k + N_j
\end{equation}
where the set of all the error variables $N_j$, hereinafter denoted by $N$, is distributed according to $\mathcal{N}(0, \Sigma)$ where $\Sigma$ is positive definite and such that $\Sigma^{-1}_{jk} = 0$ for every edge $X_j - X_k$ that is not in $G_0$. This is where our work differs from that by \citet{PetersandBuhlmann2014}, who require that $G_0$ is a directed and acyclic graph (DAG), i.e. $\Sigma^{-1}_{jk} = 0$ for all $j \neq k$. Note that every probability distribution $p$ specified by Equation \ref{eq:sem} is distributed according to $\mathcal{N}(\mu, (I-\beta)^{-1} \Sigma (I-\beta^T)^{-1})$, and is Markovian with respect to $G_0$ \citep[Theorems 10 and 11]{Penna2016}.\footnote{\citet[p. 51]{Anderssonetal.2001} describes an alternative but equivalent specification of the SEM in Equation \ref{eq:sem}. The SEM can also be seen as a generalized seemingly unrelated regression (SUR) model \citep{Zellner1962}, or a joint response graph model with dashed arrows and full lines (also called concentration regression graph model) \citep{CoxandWermuth1996}.}

The error terms $N_j$ are represented implicitly in $G_0$. They can be represented explicitly by magnifying $G_0$ into the AMP CG $G'_0$ as shown in Table \ref{tab:magnification}. The magnification basically consists in adding the error nodes $N_j$ to $G_0$ and connect them appropriately. Figure \ref{fig:example2} shows an example. Note that Equation \ref{eq:sem} implies that $X_j$ is determined by $Pa_{G_0}(X_j) \cup N_j$ and $N_j$ is determined by $X_j \cup Pa_{G_0}(X_j)$. Formally, we say that $W \in X \cup N$ is determined by $Z \subseteq X \cup N$ when $W \in Z$ or $W$ is a function of $Z$. From the point of view of the separations, that a node outside the conditioning set of a separation is determined by the conditioning set has the same effect as if the node were actually in the conditioning set. Bearing this in mind, $G_0$ and $G'_0$ represent the same separations over $X$ \citep[Theorem 9]{Penna2016}.

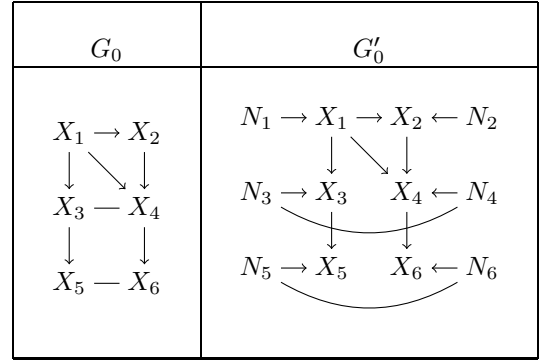
\begin{figure}
\begin{center}
\begin{tabular}{|c|c|}
\hline
&\\
$G_0$&$G'_0$\\
\hline
&\\
\begin{tabular}{c}
\begin{tikzpicture}[inner sep=1mm]
\node at (0,0) (A) {$X_1$};
\node at (1,0) (B) {$X_2$};
\node at (0,-1) (C) {$X_3$};
\node at (1,-1) (D) {$X_4$};
\node at (0,-2) (E) {$X_5$};
\node at (1,-2) (F) {$X_6$};
\path[->] (A) edge (B);
\path[->] (A) edge (C);
\path[->] (A) edge (D);
\path[->] (B) edge (D);
\path[-] (C) edge (D);
\path[->] (C) edge (E);
\path[->] (D) edge (F);
\path[-] (E) edge (F);
\end{tikzpicture}
\end{tabular}
&
\begin{tabular}{c}
\begin{tikzpicture}[inner sep=1mm]
\node at (0,0) (A) {$X_1$};
\node at (1,0) (B) {$X_2$};
\node at (0,-1) (C) {$X_3$};
\node at (1,-1) (D) {$X_4$};
\node at (0,-2) (E) {$X_5$};
\node at (1,-2) (F) {$X_6$};
\node at (-1,0) (EA) {$N_1$};
\node at (2,0) (EB) {$N_2$};
\node at (-1,-1) (EC) {$N_3$};
\node at (2,-1) (ED) {$N_4$};
\node at (-1,-2) (EE) {$N_5$};
\node at (2,-2) (EF) {$N_6$};
\path[->] (EA) edge (A);
\path[->] (EB) edge (B);
\path[->] (EC) edge (C);
\path[->] (ED) edge (D);
\path[->] (EE) edge (E);
\path[->] (EF) edge (F);
\path[->] (A) edge (B);
\path[->] (A) edge (C);
\path[->] (A) edge (D);
\path[->] (B) edge (D);
\path[->] (C) edge (E);
\path[->] (D) edge (F);
\path[-] (EC) edge [bend right] (ED);
\path[-] (EE) edge [bend right] (EF);
\end{tikzpicture}
\end{tabular}\\
&\\
\hline
\end{tabular}
\caption{Example of the magnification of an AMP CG.}\label{fig:example2}
\end{center}
\end{figure}

A less formal but more intuitive interpretation of AMP CGs is as follows. We can interpret the parents of each node in a CG as its observed causes. Its unobserved causes are summarized by an error node that is represented implicitly in the CG. We can interpret the undirected edges in the CG as the dependence relationships between the different error nodes. The causal structure is constrained to be a directed and acyclic graph, but the dependence structure can be any undirected graph. This causal interpretation of AMP CGs parallels that of acyclic directed mixed graphs \citep{Richardson2003,Pearl2009}. However, whereas a missing edge between two error nodes in acyclic directed mixed graphs represents marginal independence, in AMP CGs it represents conditional independence given the rest of the error nodes.

We assume that the coefficients $\beta_{jk}$ in Equation \ref{eq:sem} and the non-zero entries of the concentration matrix $\Sigma^{-1}$ have been selected at random. This implies that $p$ is faithful to $G_0$ with probability almost 1 \cite[Theorem 6.1]{Levitzetal.2001}. We therefore assume faithfulness hereinafter. We also assume that $p$ has been preprocessed by rescaling $N_j$ in Equation \ref{eq:sem} as $N_j \sigma / \sigma_j$ for all $j$, where $\sigma_j^2=\Sigma_{jj}$ and $\sigma^2$ is an arbitrary positive value. This implies that we assume that the errors have equal variance, namely $\sigma^2$. The following lemma proves that, after the rescaling, the error covariance matrix is still positive definite and keeps all the previous (in)dependencies which implies that, after the rescaling, $p$ is still most likely faithful to $G_0$.

\begin{lemma}\label{lem:rescaling}
Consider rescaling $N_j$ in Equation \ref{eq:sem} as $N_j \sigma / \sigma_j$ for all $j$. Then, the error covariance matrix has the same independences before and after the rescaling. Moreover, the error covariance matrix is positive definite after the rescaling if and only if it was so before the rescaling.
\end{lemma}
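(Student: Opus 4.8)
The plan is to recognize the rescaling as a congruence transformation of $\Sigma$ by a fixed positive diagonal matrix, and then read off both claims from standard properties of such transformations. Concretely, let $D = \sigma \cdot \mathrm{diag}(1/\sigma_1, \ldots, 1/\sigma_n)$, so that the rescaled error vector is $DN$ and its covariance matrix is $\Sigma' = D \Sigma D$ (using that $D$ is symmetric). Since $\sigma > 0$ and each $\sigma_j = \sqrt{\Sigma_{jj}} > 0$, the matrix $D$ has strictly positive diagonal entries and is therefore invertible, with $D^{-1} = \sigma^{-1} \cdot \mathrm{diag}(\sigma_1, \ldots, \sigma_n)$. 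This single observation drives the whole proof.

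For the positive definiteness claim, I would use the quadratic-form characterization: for every vector $x$ we have $x^\top \Sigma' x = (Dx)^\top \Sigma (Dx)$. Because $D$ is invertible, the map $x \mapsto Dx$ is a bijection that sends nonzero vectors to nonzero vectors, so $x^\top \Sigma' x > 0$ holds for all $x \neq 0$ if and only if $y^\top \Sigma y > 0$ holds for all $y \neq 0$. Hence $\Sigma'$ is positive definite exactly when $\Sigma$ is; this is just Sylvester's law of inertia for the congruence $\Sigma' = D^\top \Sigma D$.

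For the independences claim I would argue that a componentwise invertible rescaling can neither create nor destroy any conditional independence. Since $D$ is diagonal and invertible, for any partition of the errors into disjoint sets $A$, $B$, $C$ the transformation acts block-diagonally and invertibly on each set, so $N_A \ci N_B \mid N_C$ if and only if $(DN)_A \ci (DN)_B \mid (DN)_C$; this is elementary and uses neither Gaussianity nor invertibility of $\Sigma$, which keeps the argument valid regardless of the definiteness status treated in the second clause. For the specific independences that constrain the SEM, the same fact is visible in the concentration matrix: whenever $\Sigma$ is invertible, $(\Sigma')^{-1} = D^{-1} \Sigma^{-1} D^{-1}$ gives $(\Sigma')^{-1}_{jk} = (\sigma_j \sigma_k / \sigma^2)\, \Sigma^{-1}_{jk}$, so the zero pattern of the concentration matrix---and thus the pairwise conditional independences $N_j \ci N_k \mid N \setminus \{N_j, N_k\}$---is preserved exactly.

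I would expect no serious obstacle here: the lemma is essentially the statement that congruence by a positive diagonal matrix preserves inertia and correlation structure. The only point requiring care is to establish the independences claim without presupposing that $\Sigma$ is positive definite, since positive definiteness is precisely what the second clause characterizes; the componentwise-transformation argument handles this cleanly, and the concentration-matrix computation can then be quoted as the concrete consequence relevant to Equation \ref{eq:sem}.
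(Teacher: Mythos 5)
Your proof is correct, but it takes a genuinely different route from the paper's. You package the rescaling as a congruence $\overline{\Sigma} = D \Sigma D$ by an invertible positive diagonal matrix and then invoke two general invariance principles: the quadratic-form characterization (equivalently, Sylvester's law of inertia) for positive definiteness, and the fact that componentwise invertible transformations preserve conditional independence, supplemented by the explicit computation $\overline{\Sigma}^{-1} = D^{-1}\Sigma^{-1}D^{-1}$ showing the zero pattern of the concentration matrix is unchanged. The paper instead works entirely at the level of determinants: it reduces each conditional independence $N_j \ci N_k \mid N_L$ to the vanishing of $\det(A)$, where $A$ is obtained by deleting row $j$ and column $k$ from the marginal covariance $\Sigma_{jkL}$ (via Lauritzen's Proposition 5.2), observes from the Leibniz permutation expansion that the rescaling multiplies every such determinant by a fixed positive constant, and then reuses the identical scaling observation with the leading-principal-minors criterion to settle positive definiteness. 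Your argument is shorter and more conceptual --- one structural observation about $D$ drives both claims, and the independence part needs neither Gaussianity nor invertibility of $\Sigma$, which you rightly note avoids any apparent circularity with the second clause (though in the paper's setting $\Sigma$ is positive definite by hypothesis, so this is not an actual issue there). The paper's argument buys a single uniform computation that handles arbitrary conditioning sets $L$ and the definiteness claim with the same determinant-scaling fact, at the cost of being more calculational. Both proofs are complete; yours would be a legitimate substitute.
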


\begin{proof}
Let $\Sigma$ and $\overline{\Sigma}$ denote the error covariance matrices before and after the rescaling, respectively. Note that we only need to consider independences between singletons. In particular, $N_j$ is independent of $N_k$ conditioned on $N_L$ if and only if $(\Sigma_{jkL})^{-1}_{jk}=0$ \cite[Proposition 5.2]{Lauritzen1996} and, thus, if and only if $det(A)=0$ where $A$ is the result of removing the row $j$ and column $k$ from $\Sigma_{jkL}$. Note that $det(A)=\sum_{\pi \in S} sign(\pi) \prod_i A_{i \pi(i)}$ where $S$ denotes all the permutations over the number of rows or columns of $A$. Then, $det(\overline{A})=det(A) \prod_i \sigma^2/\sigma_i^2$ and, thus, $det(A)=0$ if and only $det(\overline{A})=0$.

A matrix is positive definite if and only if the determinants of all its upper-left submatrices are positive. Therefore, it follows from the previous paragraph that $\Sigma$ and $\overline{\Sigma}$ are both positive definite or none.
\end{proof}

We now present an auxiliary result and then our main result.

\begin{lemma}\label{lem:conditioning}
Let $X$ be distributed according to $\mathcal{N}(\mu, \Sigma)$ with positive definite $\Sigma$. Let $A$ and $B$ denote a non-trivial partition of $X$. Also, let $A^*=A|_b$ in distribution. Then, $var(A^*_j) \leq var(A_j)$ for all $j$ and $b$.
\end{lemma}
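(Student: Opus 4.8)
The plan is to compute the conditional covariance of $A$ given $B=b$ in closed form and then observe that the diagonal of the correction term is nonnegative. First I would write $\Sigma$ in block form according to the partition $\{A,B\}$, with blocks $\Sigma_{AA}$, $\Sigma_{AB}$, $\Sigma_{BA}=\Sigma_{AB}^T$ and $\Sigma_{BB}$. The standard formula for the conditional distribution of a multivariate Gaussian then gives $A^*=A|_b \sim \mathcal{N}(\mu_A + \Sigma_{AB}\Sigma_{BB}^{-1}(b-\mu_B),\ \Sigma_{AA} - \Sigma_{AB}\Sigma_{BB}^{-1}\Sigma_{BA})$, so the conditional covariance is the Schur complement of $\Sigma_{BB}$ in $\Sigma$. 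Here $\Sigma_{BB}$ is invertible because it is a principal submatrix of the positive definite matrix $\Sigma$ and is therefore itself positive definite. Crucially, this conditional covariance does not depend on the particular value $b$, so it suffices to bound its diagonal once, uniformly in $b$.

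Next I would argue that the correction term $\Sigma_{AB}\Sigma_{BB}^{-1}\Sigma_{BA}$ is positive semidefinite. Using the symmetry $\Sigma_{BA}=\Sigma_{AB}^T$, this term has the form $M P M^T$ with $M=\Sigma_{AB}$ and $P=\Sigma_{BB}^{-1}$. Since $\Sigma_{BB}$ is positive definite, so is $P$, and for any vector $v$ we have $v^T M P M^T v = (M^T v)^T P (M^T v) \geq 0$; hence $M P M^T$ is positive semidefinite. In particular, taking $v=e_j$ shows that each diagonal entry $[\Sigma_{AB}\Sigma_{BB}^{-1}\Sigma_{BA}]_{jj}$ is nonnegative.

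Combining the two steps, $var(A^*_j) = [\Sigma_{AA}]_{jj} - [\Sigma_{AB}\Sigma_{BB}^{-1}\Sigma_{BA}]_{jj} \leq [\Sigma_{AA}]_{jj} = var(A_j)$ for every $j$ and every $b$, which is the claim. I do not anticipate a genuine obstacle here: the only points requiring any care are recording that the conditional covariance is independent of $b$, so that the inequality holds for all $b$ at once, and that positive definiteness of $\Sigma$ descends to the submatrix $\Sigma_{BB}$, both of which are standard facts about Gaussian conditioning and principal submatrices.
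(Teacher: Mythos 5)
Your proposal is correct and follows essentially the same route as the paper: both use the Gaussian conditioning formula $var(A^*)=\Sigma_{AA}-\Sigma_{AB}\Sigma_{BB}^{-1}\Sigma_{BA}$ and conclude from the positive definiteness of $\Sigma_{BB}^{-1}$ that the diagonal correction terms are nonnegative. You simply spell out the intermediate steps (the Schur complement form, the independence from $b$, and the quadratic-form argument) in more detail than the paper does.
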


\begin{proof}
Note that $var(A^*)=\Sigma_{AA} - \Sigma_{AB} \Sigma_{BB}^{-1} \Sigma_{BA}$, which implies that $var(A^*_j) = \Sigma_{jj} - \Sigma_{jB} \Sigma_{BB}^{-1} \Sigma_{Bj}$, which implies that $var(A^*_j) \leq var(A_j)$ since $\Sigma_{BB}^{-1}$ is positive definite.
\end{proof}

\begin{theorem}\label{the:main}
Let $p$ be a probability distribution generated by a SEM with AMP CG $G_0$ and equal error variances. Then, $G_0$ is identifiable from $p$.
\end{theorem}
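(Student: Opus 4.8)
The plan is to give an explicit procedure that reconstructs $G_0$ from $p$ alone and to prove that it returns $G_0$; since the procedure depends only on $p$, this establishes identifiability directly, with no need to reason about a competing generating graph. I would decompose the reconstruction into two stages. First, recover the Markov equivalence class of $G_0$ -- its adjacencies and triplexes -- from the conditional independences of $p$. Second, use the equal-variance assumption, through the variance-reduction inequality of Lemma \ref{lem:conditioning}, to decide the type and orientation of every edge and thereby pin down $G_0$ inside its class.

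For the first stage, faithfulness (which holds by the randomness of the parameters) makes the conditional independences of $p$ -- equivalently, its vanishing partial correlations, as $p$ is Gaussian -- coincide with the separations $A \ci_{G_0} B \mid C$. By \citet[Theorem 5]{Anderssonetal.2001} and Lemma \ref{lem:eq} these determine the adjacencies and the triplexes, hence the whole equivalence class. This fixes the skeleton but, crucially, not the chain components: components are not invariant across a class (for instance $X_1 \ra X_2$ and $X_1 - X_2$ are Markov equivalent yet have different components), so deciding which edges are undirected is precisely part of the second stage. Two variance facts feed that stage. Because $\Sigma^{-1}_{jk}=0$ for every pair of nodes in distinct chain components, $\Sigma^{-1}$ and hence $\Sigma$ is block diagonal along the components, so $N_j$ is independent of every variable outside its own component; and since a parent of $X_j$ must lie in a strictly earlier component (else an undirected path together with the directed edge would be a forbidden semidirected cycle), we get $cov(X_{Pa_{G_0}(X_j)},N_j)=0$ and therefore $var(X_j \mid Pa_{G_0}(X_j))=var(N_j)=\sigma^2$ for every $j$. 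Faithfulness makes the parental contribution strictly positive, so $var(X_j)=\sigma^2$ exactly when $X_j$ is parentless, whence $\sigma^2=\min_j var(X_j)$ is read off from $p$.

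For the second stage I would orient the edges by a sink-peeling recursion conducted at the level of chain components. For a terminal component $T$ (no directed edge leaving $T$), independence of $N_T$ from everything outside $T$ gives $var(X_T \mid X \setminus T)=\Sigma_{TT}$, whose diagonal is uniformly $\sigma^2$; by contrast, any node $X_j$ possessing a child satisfies $var(X_j \mid X \setminus T') \le var(X_j \mid Pa_{G_0}(X_j))=\sigma^2$ by Lemma \ref{lem:conditioning}, with strict inequality by faithfulness, where $T'$ is its component. Hence the terminal components are exactly the maximal connected node sets $T$ for which the residual variance given the complement is flat at $\sigma^2$: their internal skeleton edges are undirected and every edge entering them from outside is directed inward. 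One then marginalizes out the terminal components -- the surviving variables still obey an equal-variance SEM over the induced AMP CG -- and recurses, orienting all remaining directed edges. Matching the resulting graph with $G_0$ shows the procedure returns $G_0$.

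I expect the second stage to be the main obstacle, and within it the entanglement between orientation and the within-component error correlations. Unlike the independent-error DAG setting of \citet{PetersandBuhlmann2014}, one cannot test a node against all the others: conditioning $X_j$ on its same-component neighbours would spuriously push its variance below $\sigma^2$, so the conditioning must be performed against the complement $X \setminus T$ of a whole component. Because the components are themselves unknown -- they are not recoverable from the equivalence class -- the discovery of the terminal components and the orientation of the edges have to be carried out jointly, and the clean dichotomy between the flat value $\sigma^2$ and the strictly smaller values has to be established; this is exactly the point at which the strict form of Lemma \ref{lem:conditioning}, used together with faithfulness to guarantee that a genuine child removes variance, is indispensable, and it is where I would concentrate the effort.
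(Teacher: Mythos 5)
Your overall strategy is genuinely different from the paper's. The paper does not construct $G_0$ from $p$; it argues by contradiction, using the result of \citet[Theorem 3]{SonntagandPenna2015} that any two Markov equivalent AMP CGs are connected by a sequence of feasible splits and mergings. This reduces the whole problem to a single local comparison: if two equal-variance SEMs induced $p$, their graphs $G$ and $H$ could be taken to differ by one feasible merge of two chain components $U$ and $L$, and conditioning on $Q=Pa_G(U\cup L)\setminus U$ then forces $var(X_j|q)>\sigma^2$ for some $X_j\in L$ from the $G$-side decomposition, while the $H$-side gives $var(X_j|q)\leq\sigma^2$. Your first stage (recovering the equivalence class via faithfulness) coincides with the paper's first step, and your variance bookkeeping (parents lie in strictly earlier components, $\Sigma$ is block diagonal over components, $var(X_j \,|\, Pa_{G_0}(X_j))=\sigma^2$, and $\sigma^2=\min_j var(X_j)$) is sound.

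The gap is in the second stage, and although it is exactly where you said you would concentrate effort, it is the crux rather than a detail. You establish two one-sided facts --- a terminal chain component $T$ passes the flatness test, and a node with a child fails the test \emph{when the candidate set is its own chain component} --- and then assert that ``the terminal components are exactly the maximal connected node sets $T$ for which the residual variance given the complement is flat at $\sigma^2$.'' Since the chain components are not recoverable from the equivalence class, the procedure must evaluate the test on arbitrary connected subsets of the skeleton, and the asserted characterization requires showing that every connected passing set is contained in a terminal component. Your two facts do not give this. For instance, if $T$ properly contains a terminal component $T_0$ together with an outside parent $X_m$ of some $X_j\in T_0$, the node $X_m$ can itself pass the test (e.g.\ when $X_m$ is a parentless singleton component all of whose children lie in $T_0$, one gets $var(X_m \,|\, X\setminus T)=\sigma^2$); ruling such $T$ out needs a separate argument that some \emph{other} node fails, here that $var(X_j \,|\, X\setminus T)=\beta_{jm}^2\sigma^2+\sigma^2>\sigma^2$ because the conditioning set no longer contains all of $Pa_{G_0}(X_j)$. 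A parallel argument is needed for proper connected subsets of a terminal component (there the within-component error correlations push some variance strictly below $\sigma^2$) and for connected sets straddling several components. None of this looks unfixable, but until that case analysis is carried out the procedure is not shown to return $G_0$, so identifiability does not yet follow; this combinatorial burden is precisely what the paper sidesteps by invoking the split/merge characterization of the equivalence class.
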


\begin{proof}
Under the faithfulness assumption, we can identify the Markov equivalence class of $G_0$ from $p$. An efficient way of doing it is the learning algorithm by \citet{Penna2012}. Any member of the equivalence class can be transformed into any other member by a sequence of feasible splits and mergings \cite[Theorem 3]{SonntagandPenna2015}. These (opposite) operations transform an AMP CG into a Markov equivalent chain graph by splitting or merging two chain components of the former. Therefore, we can assume to the contrary that there are two SEMs that induce $p$ and such that the CG $H$ corresponding to the latter is the result of a feasible merge in the CG $G$ corresponding to the former. As the name suggest, a feasible merge of two chain components $U$ and $L$ of $G$ implies dropping the direction of the edges between $U$ and $L$. Therefore, $Pa_G(U \cup L) \setminus U = Pa_{H}(U \cup L)$ and $ND_G(U \cup L) = ND_{H}(U \cup L)$. Let $Q=Pa_G(U \cup L) \setminus U$, and note that
\[
U \cup L \ci_G ND_G(U \cup L) \setminus Q | Q
\]
and
\begin{equation}\label{eq:ind2}
U \cup L \ci_{H} ND_{H}(U \cup L) \setminus Q | Q.
\end{equation}
Note from $G$ that $L= \beta_L [Q, U] + N_L$ by Equation \ref{eq:sem}. Note also that $N_L \ci_{G'} Q \cup U$ where $G'$ is the magnification of $G$. Let $L^*=L|_q$ and $U^*=U|_q$ in distribution. Then
\[
L^*=f(q) + \beta^*_L U^* + N_L
\]
in distribution for some linear function $f$ \cite[Lemma A2]{Petersetal.2014}. Then
\[
var(L^*)= \beta^*_L var(U^*) (\beta^*_L)^T + var(N_L)
\]
because $N_L \ci_{G'} U^*$. Note that for some $X_j \in L$, we have that $\beta^*_j$ is non-zero and, thus, $\beta^*_j var(U^*) (\beta^*_j)^T > 0$ because $var(U^*)$ is positive definite. Then, $var(X_j) > \sigma^2$. However, we also have that $var(X_j) \leq \sigma^2$ by Lemma \ref{lem:conditioning}. Therefore, we have reached a contradiction and thus $G=H$.
\end{proof}

It follows from the theorem above that two AMP CGs that represent the same independences are not Markov equivalent under the constraint of equal error variances, i.e. Lemma \ref{lem:eq} does not hold under this constraint.

\section{GREEDY SEARCH}

\citet{DrtonandEichler2006} describe an iterative procedure for maximum likelihood estimation of the SEM parameters associated with an AMP CG, i.e. the coefficients $\beta_{jk}$ in Equation \ref{eq:sem} and the non-zero entries of the concentration matrix $\Sigma^{-1}$. They also show that the procedure is consistent. The procedure estimates the parameters for each chain component separately. For a given component, it alternates between estimating the regression coefficients and estimating the error covariance matrix. The former step consists in a generalized least squares formula. The latter step consists in running the iterative proportional fitting procedure on the regression residuals \citep{Lauritzen1996,WainwrightandJordan2008}.\footnote{\citet{DrtonandEichler2006} mention that the first step of their procedure consists in a generalized least squares (GLS) formula. Their formula certainly resembles the classical one. Therefore, one may wonder whether their procedure coincides with iterated feasible GLS estimation. In other words, consider the multivariate regression model $Y=\beta X + \epsilon$ where $\epsilon$ is distributed as $\mathcal{N}(0, \Sigma)$. If $\Sigma=I \sigma^2$, then the ordinary least squares (OLS) estimates of $\beta$ can be computed as $\hat{\beta}=(X^T X)^{-1} X^T Y$. These are also the maximum likelihood (ML) estimates of $\beta$. If $\Sigma \neq I \sigma^2$ then the GLS estimates of $\beta$ can be computed as $\hat{\beta}=(X^T \Sigma^{-1} X)^{-1} X^T \Sigma^{-1} Y$. Again, these are also the ML estimates of $\beta$, because GLS estimation can be seen as a data transformation so that OLS estimation applies. When $\Sigma$ is unknown (as in our case), it can be estimated, e.g. by running the iterative proportional fitting procedure on the residuals of OLS estimation. This is known as feasible GLS estimation. One can use the resulting estimates of $\beta$ to recompute the residuals and thus the estimate of $\Sigma$. This is known as iterated feasible GLS estimation. It is unknown to us if this procedure results in consistent ML estimates.}

In the future, we would like to extend the procedure of \citet{DrtonandEichler2006} by incorporating the equal error variances constraint in the iterative proportional fitting procedure. A similar extension to incorporate parameter equality constraints in the concentration or correlation matrix of undirected Gaussian graphical models has been proposed by \citet{HojsgaardandLauritzen2008}. Such an extension may allow us to develop a penalized maximum likelihood score for AMP CGs that is consistent by Theorem \ref{the:main}, i.e. $G_0$ asymptotically maximizes the score. Such a score may allow us to develop the following learning algorithms. We can use the PC-like learning algorithm developed by \citet{Penna2012} for AMP CGs in order to identify the Markov equivalence class of $G_0$ and, then, use the penalized maximum likelihood score to orient the edges as in $G_0$. However, the first step assumes faithfulness. We can alternatively perform a greedy search in the space of AMP CGs guided by the penalized maximum likelihood score. This approaches does not assume faithfulness at the cost of higher computational cost and the risk of getting trapped in a equivalence class that does not include $G_0$. In the future, we would like to implement the latter algorithm and evaluate it when the equal error variances assumption holds and does not hold.

\comments{
\section{Discussion}

Generalizable to CAMs, i.e. $X_j = \sum_{X_k \in Pa_{G_0}(A)} \beta_{jk} f(X_k) + N_j$ ? Can these be seen as SVMs and thus as a convex problem ?
}

\bibliographystyle{plainnat}
\bibliography{IdentifiabilityAMPCGs}

\end{document}